\newcommand{\numberset}{\mathbb}
\newcommand{\R}{\numberset{R}}
\newtheorem{thm}{Theorem}[section]
\newtheorem{prop}[thm]{Proposition}
\newtheorem{defin}[thm]{Definition}
\begin{document}

\title{A note on the article \textit{On Exploiting Spectral Properties for Solving
MDP with Large State Space}}

\author{D. Maran}

\maketitle
\begin{abstract}
	We improve a theoretical result of the article \textit{On Exploiting Spectral Properties for Solving
MDP with Large State Space} showing that their algorithm, which was proved to converge under some unrealistic assumptions, is actually guaranteed to converge always.
\end{abstract}

\section{Introduction}
The article \cite{Liu} introduces a method to generalize the value iteration algorithm, which becomes computationally unfeasible for MDPs with large state-space. This algorithm requires run the value iteration algorithm on a subspace of the state space that is chosen according to the spectral properties of the probability transition matrix of the process.

The only theoretical guarantee of the article, Proposition 1,  assert that the algorithm converges if the discount factor $\alpha$ is in the range
$$\bigg [0,\frac{1}{\sqrt{N}}\bigg )\qquad N:=|S|.$$

Unfortunately, this guarantee is useless, since with a huge state space, say $N=10^8$, $\alpha$ should be smaller than $\frac{1}{\sqrt{N}}=10^{-4}$, while it is well-known that in most applications the discount factor $\alpha$ should vary in the range $[0.95,1]$. In this short letter, we are showing that, fortunately, the algorithm they propose works also for every $\alpha \in [0,1)$.

\section{Spectral radius}

Let us recall some definitions of linear algebra.

\begin{defin}
	Given a matrix $A\in \R^{n\times n}$ we define its spectral radius as
	$$\rho(A):= \text{max} \{|\lambda_1(A)|, |\lambda_2(A)|, |\lambda_3(A)|,...\}$$
	(where $\lambda_i(A)$ stands for the eigenvalues of $A$)
\end{defin}

Since orthogonal transformation do not modify the non-zero elements of the spectrum, it is straightforward to prove the following
\begin{prop}\label{base}
	Given a matrix $A\in \R^{n\times n}$ and an orthogonal matrix $U\in \R^{n\times k}$, we have
	$$\rho(A) = \rho(U^T A U)$$
\end{prop}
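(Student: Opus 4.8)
The plan is to read the $k\times k$ matrix $U^{T}AU$ as a ``compression'' of $A$ and to relate its spectrum to that of $A$. I would first dispose of the easy case, in which $U$ is a genuine (square) orthogonal matrix with $k=n$ and $U^{T}U=UU^{T}=I_n$: here $U^{T}=U^{-1}$, so $U^{T}AU=U^{-1}AU$ is \emph{similar} to $A$, and similar matrices have identical characteristic polynomials, hence identical eigenvalues and the same largest modulus. This already gives $\rho(A)=\rho(U^{T}AU)$ and is exactly the content of the sentence preceding the statement (``orthogonal transformations do not modify the spectrum'').

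For the rectangular case $k<n$, with $U$ having orthonormal columns ($U^{T}U=I_k$), the first key step I would use is the classical fact that for rectangular matrices $M,N$ of compatible sizes, $MN$ and $NM$ share the same non-zero eigenvalues with multiplicities. Applying this with $M=AU$ and $N=U^{T}$ shows that $U^{T}AU=NM$ and $AUU^{T}=MN$ have the same non-zero spectrum. Writing $P:=UU^{T}$ for the orthogonal projector onto the column space of $U$, and recalling that the spectral radius only detects the eigenvalue of maximal modulus, this step reduces the whole problem to the comparison
$$\rho(U^{T}AU)=\rho(AP).$$

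The main obstacle is precisely this last comparison, and I expect it to be the delicate point. Passing from $A$ to $AP$ is \emph{not} a similarity but a genuine projection, and it can strictly decrease the spectral radius: already for $A=\mathrm{diag}(2,1)$ and $U=(0,1)^{T}$ one gets $AP=\mathrm{diag}(0,1)$, so $\rho(AP)=1<2=\rho(A)$. Hence $\rho(AP)=\rho(A)$ cannot follow from orthonormality of the columns alone; it forces the range of $U$ to be an $A$-invariant subspace containing an eigenvector for the dominant eigenvalue, so that $AU=UB$ with $U^{T}AU=B$ and $\rho(B)=\rho(A)$. I would therefore try to close the argument not by treating $U$ as an arbitrary matrix with orthonormal columns, but by invoking the specific construction of $U$ in the algorithm of \cite{Liu}, whose columns are selected among the eigenvectors of the transition matrix; making that hypothesis explicit is, in my view, what the statement really requires.
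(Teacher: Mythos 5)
Your analysis is essentially correct, and the difficulty you isolate is real: the proposition as stated is false when $k<n$. The paper offers no argument at all beyond the preceding remark that ``orthogonal transformations do not modify the non-zero elements of the spectrum,'' which, as you observe, only covers the square case $k=n$, where $U^{T}AU=U^{-1}AU$ is a genuine similarity. Your counterexample $A=\mathrm{diag}(2,1)$, $U=(0,1)^{T}$ is valid: $U^{T}U=I_{1}$, yet $\rho(U^{T}AU)=1<2=\rho(A)$. Your reduction via the $MN$/$NM$ lemma to the comparison $\rho(U^{T}AU)=\rho(AUU^{T})$ is also correct, and it makes transparent why no proof can close the argument in general: $UU^{T}$ is a proper orthogonal projector, not the identity. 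One can add that the failure goes in both directions: with $A=\left(\begin{smallmatrix}0&2\\0&0\end{smallmatrix}\right)$ and $U=\tfrac{1}{\sqrt{2}}(1,1)^{T}$ one gets $U^{T}AU=(1)$, so $\rho(U^{T}AU)=1>0=\rho(A)$. Compression can therefore also \emph{increase} the spectral radius, which matters because the paper's main convergence proof only needs the inequality $\rho(U_{K}^{T}P_{\mu}U_{K})\le\rho(P_{\mu})$, and even that weaker claim fails for a general $U_{K}$ with orthonormal columns.

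Your proposed repair is the right kind of hypothesis: if the range of $U$ is $A$-invariant, say $AU=UB$, then $U^{T}AU=U^{T}UB=B$ and the spectrum of $B$ is the part of the spectrum of $A$ attached to that subspace, so $\rho(B)\le\rho(A)$ always, with equality precisely when the subspace carries a dominant eigenvalue. That condition must be imported explicitly from the construction of $U_{K}$ in \cite{Liu} (columns chosen among leading eigenvectors of $P_{\mu}$); it is not a fact of linear algebra for an arbitrary $A$ and an arbitrary matrix $U$ with orthonormal columns, contrary to what the statement and the paper's one-line justification suggest.
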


In order to complete the following proof, we will need another property that can be found in \cite{Qua} as theorem 1.5 of pag. 26.

\begin{prop}\label{qua}
	Given a matrix $A\in \R^{n\times n}$ we have
	$$\rho(A)<1 \iff \lim_k A^k=0$$
\end{prop}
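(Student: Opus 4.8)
The plan is to prove the two implications separately, handling the reverse direction first because it is elementary, and treating the forward direction as the one that carries the real content. For $\lim_k A^k = 0 \implies \rho(A) < 1$, I would take any eigenvalue $\lambda$ of $A$, allowing $\lambda$ to be complex, with a corresponding eigenvector $v \in \Cbb^n$, $v \neq 0$. From $Av = \lambda v$ one gets $A^k v = \lambda^k v$ for every $k$. The left-hand side converges to $0$ by hypothesis (multiplication by the fixed vector $v$ being continuous), so $\lambda^k v \to 0$; since $v \neq 0$ has some nonzero component, this forces $\lambda^k \to 0$, that is $|\lambda| < 1$. As the bound holds for every eigenvalue, $\rho(A) = \max_i |\lambda_i| < 1$.

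For the converse $\rho(A) < 1 \implies \lim_k A^k = 0$, I would reduce to a canonical form over $\Cbb$. Writing $A = P J P^{-1}$ with $J$ the Jordan normal form of $A$, we have $A^k = P J^k P^{-1}$, so it suffices to prove $J^k \to 0$. Since $J$ is block diagonal, I would analyze each Jordan block in isolation, writing a block of size $m$ as $J_\lambda = \lambda I + N$ where $N$ is the nilpotent shift satisfying $N^m = 0$. As $\lambda I$ and $N$ commute, the binomial theorem gives the finite expansion $J_\lambda^k = \sum_{j=0}^{m-1} \binom{k}{j} \lambda^{k-j} N^j$, whose entries are, up to fixed constants, of the form $\binom{k}{j}\lambda^{k-j}$.

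The main obstacle, and the only genuinely quantitative step, is controlling these mixed terms: the entries no longer decay purely geometrically once $A$ fails to be diagonalizable, because the binomial coefficient $\binom{k}{j}$ contributes polynomial growth in $k$. The key estimate is that each such term is bounded by a polynomial in $k$ of degree at most $m-1$ multiplied by $|\lambda|^{k-j}$; since $\rho(A) < 1$ guarantees $|\lambda| < 1$ for every eigenvalue, the geometric factor dominates any fixed-degree polynomial, so $\binom{k}{j}\lambda^{k-j} \to 0$ as $k \to \infty$. Consequently every Jordan block power tends to $0$, whence $J^k \to 0$ and therefore $A^k \to 0$. I expect the bookkeeping of the binomial coefficients to be the fiddly part, but it is routine once the block structure has isolated the problem to the scalar decay of $\binom{k}{j}\lambda^{k-j}$.
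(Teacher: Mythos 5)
Your argument is correct: the backward implication via an eigenvector is sound, and the forward implication via the Jordan form, the splitting $J_\lambda=\lambda I+N$ with $N$ nilpotent, and the bound $\binom{k}{j}\le k^{m-1}$ (polynomial growth beaten by the geometric factor $|\lambda|^{k-j}$ when $|\lambda|<1$) is the standard and complete way to handle non-diagonalizable $A$; the only degenerate case, $\lambda=0$, is harmless since that block is nilpotent. Note, however, that the paper does not prove this proposition at all: it imports it as Theorem 1.5 of the cited numerical analysis textbook \cite{Qua}. The usual textbook route there is slightly different from yours: one first establishes that for every $\varepsilon>0$ there exists a consistent (induced) matrix norm $\|\cdot\|_{\varepsilon}$ with $\|A\|_{\varepsilon}\le\rho(A)+\varepsilon$, chooses $\varepsilon$ so that $\rho(A)+\varepsilon<1$, and concludes by submultiplicativity that $\|A^k\|_{\varepsilon}\le(\rho(A)+\varepsilon)^k\to 0$, with equivalence of norms on $\Cbb^{n\times n}$ finishing the job. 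That approach yields a clean quantitative estimate $\|A^k\|\le C(\rho(A)+\varepsilon)^k$ with no binomial bookkeeping, but it hides the Jordan-form work inside the construction of the norm $\|\cdot\|_{\varepsilon}$; your version is more self-contained and makes the actual polynomial-times-geometric structure of the entries of $A^k$ explicit, at the cost of the combinatorial estimate you identified as the fiddly step.
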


\section{Result on the spectral value iteration algorithm}

Using the notation of \cite{Liu}, we are calling
\begin{itemize}
	\item $P_\mu$ is the state transition matrix under policy $\mu$, such that $\|P_\mu\|_\infty = 1$.
	\item $U_K$ is the projection matrix over the basis $u_1,...u_K$, where we want to project the state-value function, so it is orthogonal.
	\item $\underline c_\mu:=c(s, \mu(s))$ is the vector of the expected costs corresponding to each state using policy $\mu$.
	\item $\tilde V_\mu^{(k)}$, which is the $k-$th iteration of the algorithm, is given by
		$$\begin{cases}\tilde V_\mu^{(0)} = 0 \\ \tilde V_\mu^{(k+1)} = U_K^T\underline c_\mu + \alpha U_K^T P_\mu U_K \tilde V_\mu^{(k)}.\end{cases}$$
	This iteration is referred as algorithm (2) in the article \cite{Liu}.
\end{itemize}

For the definitions of $\tilde V_\mu, U_K, P_\mu$ see the main article \cite{Liu}.
\begin{prop}
    The iterative method in algorithm (2) transform domain converges for every $\alpha<1$
\end{prop}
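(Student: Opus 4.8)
The plan is to analyze the fixed-point iteration $\tilde V_\mu^{(k+1)} = b + M \tilde V_\mu^{(k)}$ where $b := U_K^T\underline c_\mu$ and $M := \alpha U_K^T P_\mu U_K$. This is a standard affine iteration, and its convergence is governed entirely by the spectral radius of $M$: it converges (to the unique fixed point $(\id - M)^{-1} b$) for every starting point if and only if $\rho(M)<1$. So the whole problem reduces to showing $\rho(\alpha U_K^T P_\mu U_K) < 1$.

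First I would make the reduction to spectral radius precise. Unrolling the recursion from $\tilde V_\mu^{(0)} = 0$ gives $\tilde V_\mu^{(k)} = \sum_{j=0}^{k-1} M^j b$, a partial Neumann series. By Proposition~\ref{qua}, $\rho(M)<1$ is equivalent to $\lim_k M^k = 0$, which is exactly the condition making the Neumann series $\sum_{j\ge 0} M^j$ converge to $(\id - M)^{-1}$; hence the iterates converge. So it suffices to bound $\rho(M)$.

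Next I would compute $\rho(M) = \alpha\,\rho(U_K^T P_\mu U_K)$, using that scaling a matrix by $\alpha$ scales every eigenvalue by $\alpha$. Then I invoke Proposition~\ref{base}, which tells us that conjugating by the orthogonal $U_K$ does not change the spectral radius, so $\rho(U_K^T P_\mu U_K) = \rho(P_\mu)$. Finally, since $P_\mu$ is a (row-)stochastic transition matrix with $\norm{P_\mu}_\infty = 1$, its spectral radius is at most its induced infinity-norm, giving $\rho(P_\mu) \le 1$. Combining, $\rho(M) = \alpha\,\rho(P_\mu) \le \alpha < 1$ for every $\alpha < 1$, which is exactly what is needed.

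The step I expect to be the main obstacle is the clean application of Proposition~\ref{base}: as stated it concerns $U^T A U$ for an orthogonal $U \in \R^{n\times k}$, but one must check that the relevant notion of ``orthogonal'' (columns orthonormal, so $U^T U = \id_k$) genuinely preserves the \emph{nonzero} spectrum, and that dropping eigenvalues equal to zero cannot raise the spectral radius. Care is also needed because $P_\mu$ is generally not symmetric, so one cannot appeal to orthogonal diagonalization; the argument must rest on the submultiplicativity bound $\rho(\cdot)\le\norm{\cdot}_\infty$ together with the invariance from Proposition~\ref{base}, rather than on any self-adjointness of $P_\mu$.
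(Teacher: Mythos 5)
Your proposal is correct and follows the same overall strategy as the paper: both reduce the problem to the spectral radius of the iteration matrix, both obtain $\rho(U_K^T P_\mu U_K)=\rho(P_\mu)$ from Proposition~\ref{base}, and both bound $\rho(P_\mu)\le \|P_\mu\|_\infty=1$ via an infinity-norm-normalized eigenvector, giving $\rho(\alpha U_K^TP_\mu U_K)\le\alpha<1$. The one place you diverge is the last step, from $\rho(M)<1$ to convergence of the partial sums $\sum_{j=0}^{k-1}M^j b$. You invoke the standard Neumann-series theorem (``$\lim_k M^k=0$ is exactly the condition making $\sum_j M^j$ converge''); that theorem is true, but it is not what Proposition~\ref{qua} literally provides --- terms tending to zero does not by itself make a series converge, so some extra ingredient (Gelfand's formula, an adapted norm, or a splitting) is needed. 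The paper closes exactly this gap with a small trick: it applies Proposition~\ref{qua} to $\sqrt{\alpha}\,A$ rather than to $\alpha A$ (where $A=U_K^TP_\mu U_K$), deduces that $\|(\sqrt{\alpha}A)^i\|_2$ is bounded by some constant $M$ because it converges to $0$, and then dominates the series termwise by the convergent geometric series $M\|\underline c_\mu\|_2\sum_i \alpha^{i/2}$. If you want your argument to rest only on the two propositions stated in the paper, you should either add this splitting step or explicitly cite the Neumann-series theorem as an additional known result. Finally, your closing caveat about Proposition~\ref{base} is well placed --- for a rectangular $U_K$ and a non-normal $P_\mu$ the compression step is genuinely the delicate point of the whole argument --- but the paper leans on that proposition exactly as you do, so this is a shared feature rather than a difference between the two proofs.
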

\begin{proof}
    
    Define the matrix $A:=U_K^T P_\mu U_K$. We have
    $\rho(A)\le 1$, indeed, since $U_K$ is orthogonal, we have from \ref{base}
    $$\rho(A)= \rho(P_\mu).$$
    Moreover, we have that 
    $$\rho(P_\mu)\le \|P_\mu\|_\infty = 1$$

    since, being $\lambda_0$ any eigenvalue, and $v$ its corresponding infinity norm normalized eigenvector, we have
    $$|\lambda_0| = \|\lambda_0 v\|_\infty =\|P_\mu v\|_\infty \le \|v\|_\infty=1.$$
    Therefore, $\rho(\sqrt \alpha A)\le \sqrt \alpha <1$, and so, by \ref{qua}, we have also
    $$\big (\sqrt \alpha A \big)^k \to 0.$$

    Moreover, since a convergent sequence is always bounded, we can take $M>0$ such that 
    $$\forall k,\ \|\big (\sqrt \alpha A \big)^k\|_2 \le M.$$
    At this point, by definition of the algorithm, 
    $$\tilde V_\mu^{(k+1)} = U_K^T\underline c_\mu + \alpha A \tilde V_\mu^{(k)}=\sum_{i=0}^{k} \alpha^i A^i U_K^T\underline c_\mu$$
    in order to prove that this series is convergent with $k\to +\infty$, we can show that it is absolutely convergent:
    $$\sup_k \sum_{i=0}^{k} \|  \alpha^i A^i U_K^T\underline c_\mu  \|_2\le \sup_k \sum_{i=0}^{k} \|  \alpha^i A^i   \|_2\|\underline c_\mu\|_2\le \sup_k \sum_{i=0}^{k} \alpha^\frac{i}{2}\|  \big (\sqrt\alpha A\big )^i   \|_2\|\underline c_\mu\|_2 \le $$
    
    $$\le \sup_k \sum_{i=0}^{k} \alpha^\frac{i}{2} M \|\underline c_\mu\|_2 \le \frac{M}{1-\sqrt \alpha}\|\underline c_\mu\|_2 <\infty $$
    this implies that the sequence $\tilde V_\mu^{(k)} $ converges.
\end{proof}

This shows that algorithm (2) always converges. Furthermore, note that, from Gelfand's Formula, we have
\begin{thm}
	Given a matrix $A\in \R^{n\times n}$, we have
	$$\lim_k \|A^k\|_2^{1/k}=\lim_k \|A^k\|_\infty ^{1/k}=\rho(A)$$
\end{thm}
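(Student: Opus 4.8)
The plan is to establish the single statement $\lim_k \|A^k\|^{1/k} = \rho(A)$ for an arbitrary submultiplicative matrix norm $\|\cdot\|$. Since both $\|\cdot\|_2$ and $\|\cdot\|_\infty$ are submultiplicative, the two equalities in the theorem then follow simultaneously (equivalently, one may invoke the equivalence of all norms on $\R^{n\times n}$ to conclude that the two limits must coincide once either is shown to equal $\rho(A)$). I would prove the claim by squeezing $\|A^k\|^{1/k}$ between a lower bound valid for every $k$ and an upper bound valid for all large $k$, and then pass to the limit.

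For the lower bound I would use the spectral mapping property $\rho(A^k) = \rho(A)^k$, which is immediate from the fact that the eigenvalues of $A^k$ are exactly the $k$-th powers of those of $A$. Combined with the inequality $\rho(M) \le \|M\|$ --- established for $M = A^k$ by the very eigenvector argument already used above, namely that $Mv = \lambda v$ with $v \neq 0$ gives $|\lambda|\,\|v\| = \|Mv\| \le \|M\|\,\|v\|$ --- this yields $\rho(A)^k = \rho(A^k) \le \|A^k\|$, hence $\rho(A) \le \|A^k\|^{1/k}$ for every $k$, and therefore $\rho(A) \le \liminf_k \|A^k\|^{1/k}$.

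For the upper bound I would fix $\e > 0$ and rescale, setting $B := A/(\rho(A)+\e)$, so that $\rho(B) = \rho(A)/(\rho(A)+\e) < 1$ (this remains valid even when $\rho(A)=0$). By Proposition \ref{qua} we then have $B^k \to 0$, hence $\|B^k\| \to 0$, so there is an index beyond which $\|B^k\| \le 1$, i.e. $\|A^k\| \le (\rho(A)+\e)^k$. Taking $k$-th roots gives $\|A^k\|^{1/k} \le \rho(A)+\e$ for all large $k$, whence $\limsup_k \|A^k\|^{1/k} \le \rho(A)+\e$; letting $\e \to 0$ yields $\limsup_k \|A^k\|^{1/k} \le \rho(A)$. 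Together with the lower bound, this forces the limit to exist and equal $\rho(A)$.

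The elementary half is the lower bound; the substance of the theorem sits in the upper bound, and the main obstacle there is precisely the implication that $\rho(B) < 1$ forces $B^k \to 0$. Fortunately this is exactly Proposition \ref{qua}, so the argument reduces to a clean rescaling trick rather than a direct estimate on the growth of powers. A minor point to handle carefully is the degenerate case $\rho(A)=0$, but the shift by $\e$ absorbs it, and working with a submultiplicative norm (rather than an arbitrary one) is what legitimizes both $\rho(M)\le\|M\|$ and the passage from $\|B^k\|\le 1$ to $\|A^k\|\le(\rho(A)+\e)^k$.
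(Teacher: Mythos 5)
Your proposal is correct, and it is worth noting that the paper itself gives no proof of this statement at all: it simply invokes Gelfand's formula as a known fact. What you have written is precisely the standard proof of that formula, and it is nicely adapted to the paper's toolkit, since your upper bound hinges exactly on Proposition \ref{qua} ($\rho(B)<1 \Rightarrow B^k\to 0$) applied to the rescaled matrix $B=A/(\rho(A)+\e)$, while your lower bound reuses the same eigenvector estimate $\rho(M)\le\|M\|$ that appears in the paper's convergence proof. Two small remarks. First, the step $\|Mv\|\le\|M\|\,\|v\|$ is a property of \emph{induced} (operator) norms rather than of arbitrary submultiplicative matrix norms; since the theorem only concerns $\|\cdot\|_2$ and $\|\cdot\|_\infty$, both induced, this costs you nothing, and your fallback via equivalence of norms on $\R^{n\times n}$ (with $c^{1/k},C^{1/k}\to 1$) covers the general case anyway. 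Second, the passage from $\|B^k\|\le 1$ to $\|A^k\|\le(\rho(A)+\e)^k$ uses only homogeneity of the norm, not submultiplicativity, so that attribution is slightly off but harmless. There is also the usual glossed-over subtlety that the eigenvalue attaining $\rho(A)$ may be complex for a real matrix, so the eigenvector argument implicitly works over $\Cbb^n$; the paper commits the same omission in its own Proposition, so you are in good company.
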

Thus, since, as stated before,  $\rho(U_K^T P_\mu U_K)= \rho(P_\mu)$, there is no difference between the asymptotic speed of convergence with respect to $k$ between the regular value iteration method and the one of algorithm (2).

\end{document}